\newtheorem{lemma}{Lemma}
\newtheorem{definition}{Definition}
\newtheorem{theorem}{Theorem}
\newtheorem{proof}{Proof}
\title{PPGAN: Privacy-preserving Generative Adversarial Network}
\author{
  Yi Liu\\
  School of Data Science and Technology\\
  Heilongjiang University\\
   \And
 Jialiang Peng\thanks{*Corresponding Author: Pengjialiang@hlju.edu.cn. This work is supported by the Ministry of Education of China and the School of Entrepreneurship Education of Heilongjiang University (Grant NO.201910212133) and Heilongjiang Provincial Natural Science Foundation of China (Grant NO.QC2016091).}  \\
  School of Data Science and Technology\\
  Heilongjiang University\\
  \And 
  James J.Q. Yu\\
  Department of Computer Science and Engineering\\
  Southern University of Science and Technology\\
  \And 
   Yi Wu\\
  School of Data Science and Technology\\
  Heilongjiang University\\
}
\begin{document}
\maketitle

\begin{abstract}
Generative Adversarial Network (GAN) and its variants serve as a perfect representation of the data generation model, providing researchers with a large amount of high-quality generated data. They illustrate a promising direction for research with limited data availability. When GAN learns the semantic-rich data distribution from a dataset, the density of the generated distribution tends to concentrate on the training data. Due to the gradient parameters of the deep neural network contain the data distribution of the training samples, they can easily \textit{remember} the training samples. When GAN is applied to private or sensitive data, for instance, patient medical records, as private information may be leakage. To address this issue, we propose a \textit{Privacy-preserving Generative Adversarial Network (PPGAN)} model, in which we achieve differential privacy in GANs by adding well-designed noise to the gradient during the model learning procedure. Besides, we introduced the \textit{Moments Accountant} strategy in the PPGAN training process to improve the stability and compatibility of the model by controlling privacy loss. We also give a mathematical proof of the differential privacy discriminator. Through extensive case studies of the benchmark datasets, we demonstrate that PPGAN can generate high-quality synthetic data while retaining the required data available under a reasonable privacy budget.
 
\end{abstract}

\keywords{Privacy leakage \and GAN \and deep learning \and differential privacy \and moments accountant}

\section{INTRODUCTION}
In recent years, researchers have used a large number of training data to perform data mining tasks, in the field of medical and health informatics, such as disease prediction and auxiliary diagnosis \cite{ref-33}. Deep learning models are employed to remember the characteristics of a large number of training samples for classification or prediction purposes. However, organizations such as hospitals and research institutes are paying more and more attention to the protection of data. Additionally, the \textit{General Data Protection Regulation (GDPR)}\cite{ref-1} issued by the European Union prohibits organizations from sharing private data. It is increasingly difficult for researchers to obtain training data unlimited legally.  

Fortunately, the generative model provides us with a solution to the issue of data scarcity \cite{ref-18}, yet data privacy leakage issues may arise. StyleGAN \cite{ref-2} shown impressive performance in generating fake face images. In principle, it can memorize data distribution from the small amount of training data, rendering indistinguishable high-quality ``fake" samples. However, for most people, they expect their face data not to be used as a training sample. 

GAN can implicitly disclose the privacy information of training samples. GAN model produces high-quality "fake" samples through continuous training and resampling. This training method grants hackers the opportunity to restore the original samples. Therefore, we not only need high-quality sample generation approaches but also need to achieve a reasonable level of data privacy.

Based on the above findings, we propose a \textbf{Privacy-preserving GAN (PPGAN)}. PPGAN combines with differential privacy \cite{ref-5} to ensure that the exact training samples can not be revealed by adversaries from the trained model, resulting in well-protected data privacy. In particular, we added well-designed noise to the gradients in the training process in PPGAN and used the framework of the WGAN \cite{ref-3} model as the main skeleton of PPGAN. The proposed model does not suffer from a privacy leakage issue whose proportional to the volume of data thanks to the introduced average aggregator that offsets the privacy overhead of large datasets.

We would like to point out our main contributions as follows:
\begin{itemize}
	\item [$\bullet$] We propose the PPGAN framework that can generate high-quality data points while protecting data privacy. PPGAN combines noise well-designed in the differential privacy with training gradients to disturb the distribution of the original data. Finally, we give a rigorous proof of the differential privacy discriminator in mathematics.
		
	\item [$\bullet$] We introduced the \textit{Moments Accountant} strategy that maintains the boundedness of the function, controls the privacy level and significantly improves the stability of the model training.
	
	\item [$\bullet$]  We evaluated PPGAN with benchmark datasets. The results show that PPGAN can generate high-quality data with adequately protected privacy under a reasonable privacy budget.
	
\end{itemize}
The overall structure of this paper is as follows. First, we briefly summarize the relevant literature in Section \ref {sec2} and then introduce the proposed PPGAN framework and its theoretical proof in Section \ref {sec-3}. We assess the performance of our framework in Section \ref{sec-4}. Finally, this paper is concluded in Section \ref{sec-5}.

\section{RELATED WORK}\label{sec2}
In this section, we focus on the literature on privacy-preserving deep learning. Existing literature can be roughly classified along several axes: generative adversarial networks in the medical field and privacy-preserving deep learning.

\textbf{Generative Adversarial Network.} In recent years, GAN and its variants have made meaningful progress in the academic and medical fields. Choi et al. \cite{ref-11} proposed medGAN, which is a generative adversarial network for generating multi-label discrete patient records. Brett K. Beaulieu-Jones et al. \cite{ref-10} proposed AC-GAN (under differential privacy and labeled private) to simulate participants in the \textit{SPRINT} clinical trial. However, the previously described GANs do not meet the data management requirements of \textit{GDPR} for privacy data protection.

\textbf{Privacy-Preserving Deep Learning.} Differential privacy (DP), local differential privacy (LDP), and other related algorithms combined with deep neural networks have become one of the most popular algorithmic models in the field of privacy protection. Dwork et al. \cite{ref-4}, the author of the concept of differential privacy, laid a lot of theoretical foundations for the field of differential privacy. Song et al. \cite{ref-12} added perturbations to random descent gradients, which can improve network performance after batch training. Many machine learning algorithms can achieve differential private by introducing randomization in the calculation, usually by noise \cite{ref-12}. 

We propose \textit{PPGAN} to address the challenges that appeared in the previous works. In \cite{ref-15}, although the privacy-preserving deep learning system does not need to share datasets, it still reveals the user's privacy when uploading local parameters to the server. What is different from \cite{ref-16} is that we add well-designed noise during the process of stochastic gradient descent. We introduced a moments accountant strategy, which not only successfully incorporated the privacy enhancement mechanism into the training depth generation model but also significantly improved the stability and scalability of the generation model training itself \cite{ref-21}.

\section{METHODOLOGY}\label{sec-3}
In this section, we elaborate on the proposed privacy protection framework PPGAN. We first introduce the concept of differential privacy. Subsequently, a brief introduction to GAN and WGAN. After that, we show the proposed PPGAN with theoretical analyses and the way noise is added to the gradients. Finally, we introduce \emph{moments accountant} \cite{ref-14}, which is the fundamental idea in our framework to ensure the privacy of the iterative gradient descent process. We strictly prove in mathematics that the use of the moments accountant allows the discriminator to guarantee differential privacy.  

\subsection{Differential Privacy}
Differential privacy (DP) \cite{ref-4,ref-5,ref-14} constitutes a solid standard for privacy guarantee for algorithms on the  database. For all two datasets $x$ and $y$, which differ by at most one record, we refer to these two datasets as a neighboring dataset. In the above description, natural measure of the distance between two databases $x$ and $y$ will be their  distance:
\begin{definition}{\textbf{(Distance Between Databases)}}\label{defi-8}\\
	 The ${\ell _1}$ norm of a database $x$ is denoted $||x|{|_1}$ and is defined to be: 
	\begin{equation}
	||x|{|_1} = \sum\limits_{i = 1}^{|\aleph |} {|{x_i}|} 
	\end{equation}
	The ${\ell _1}$ distance between two databases $x$ and $y$ is $||x - y|{|_1}$. In particular, when $||x - y|{|_1} = 1$, $x$ and $y$ are mutually referred to as neighboring datasets.
\end{definition}
\begin{definition}{\textbf{($(\varepsilon ,\delta )$-DP)}}\label{defi-1}\\
	A randomized algorithm $\phi ( \cdot )$ with domain ${\Phi ^{|\chi |}}$ is $(\varepsilon ,\delta )$-DP if for all $O \subseteq Range(\phi)$ and for all $d,d' \in {{\rm \Phi}^{|\aleph |}}$ (for any neighbouring datasets) such that $||d - d'|| \le 1$ :
	\begin{equation}
		Pr [\phi(d) \in O] \le {e^\varepsilon }Pr [\phi(d') \in O] + \delta \
	\end{equation}
\end{definition}
Noted that $\epsilon$ stands for privacy budget, which controls the level of privacy guarantee achieved by mechanism $\phi$. And when $\varepsilon  = \infty $, this case is non-private. 

Among the mechanisms for achieving differential privacy, the two most widely used are the \textit{Laplace mechanism and the Gaussian noise mechanism (GNM)} \cite{ref-31}. Due to the combined properties of the GNM, it is prevalent in many DP protection models. In PPGAN, we use the GNM because the moments accountant (detailed in Section \ref{sec-3-3}) provides an improved privacy boundary analysis and is well-matched to the combined properties of the GNM. The GNM is defined as follows:
\begin{equation}
\phi(x) \buildrel \Delta \over = f(x) + N(0,{\sigma ^2}{s_f}^2)
\end{equation}
where $s_f$ is defined as \textit{sensitivity}, which is only related to query type $f$. The sensitivity is defined as follows:
\begin{definition}\label{defi-sens}
	\textbf{($L_2$ norm-Sensitivity)}\\
	We given the neighboring datasets $x$ and $x'$ and given a query $f:x \to \Omega $, the \textit{sensitivity} of $f$ as follows:
	\begin{equation}
	\Delta f = \mathop {\max }\limits_{x,x'} ||f(x) - f(x')|{|_2}
	\end{equation}
\end{definition}
Noted that it records the largest difference between query results on datasets $x$ and $x'$.

According to the algorithm $\phi( \cdot )$ in Definition \ref{defi-1} is stochastic and is not related to the distribution of the output data. Moreover, the Gaussian noise mechanism adds a well-design noise to a single gradient without affecting the entire gradient aggregation. Therefore, we can use this attribute with GAN so that GAN can generate high-quality data while satisfying differential privacy.

\subsection{GAN and WGAN}
Generative adversarial network (GAN) \cite{ref-29,ref-30} is a class of deep neural network architectures comprised of two networks, pitting one against the other (thus the \textit{``adversarial"}). 
Suppose our generative model is $G(z)$, where $z$ is random noise and $G$ converts this random noise into  $x$. Take with contradicting training adjective \textit{Electronic Health Record (EHR)} as an example. Let $G$ be a generator synthesizing EHR, and $D$ is the discriminator in the generator model. For an arbitrary input $x$, the output of $D(x)$ is a real number in the range [0,1] that determines how likely this EHR is authentic. Let $P_r$ and $P_g$ represent the distribution of real ones and the distribution of generated EHRs, respectively. The objective function of the discriminative model is as follows:
\begin{equation}
\mathop {\max }\limits_D {E_{x \sim \Pr }}[\log (D(x)] + {E_{x \sim Pg}}[\log (1 - D(x)]
\end{equation}

The goal of a similar from distinguishing is to prevent them from real records and the generated ones. The entire optimization objective function is as follows:
\begin{equation}
\begin{aligned}
\mathop {\min }\limits_G \mathop {\max }\limits_D V(G,D)& = {E_{x \sim {P_{data}}}}_{(x)}[\log (D(x)] \\&+ {E_{z \sim {P_z}(z)}}[\log (1 - D(G(z)))]
\end{aligned}
\end{equation}

WGAN \cite{ref-3} uses the \textit{Wasserstein} distance instead of the \textit{Jensen-Shannon} distance. Compared with the original GAN, WGAN's parameters are less sensitive and the training process is smoother. It solves a minimax two-player game that finds the balance point of each other:
\begin{equation}
\mathop {\min }\limits_G \mathop {\max }\limits_{w \in W} {E_{x \sim {\mathop{\rm P}\nolimits} data(x)}}[{f_w}(x)] - {E_{z \sim {P_{z(z)}}}}[{f_w}(G(z))]
\end{equation}

\subsection{PPGAN framework}
In this section, we present the proposed \textit{Privacy-preserving Generative Adversarial Network (PPGAN) model}, which is detailed in Algorithm \ref{al-2} and illustrated in Fig. \ref{fig3}. Noted that the $Discriminator$ has access to the real data, while the $Generator$ only receives feedback on the real data through the $Discriminator$'s output. This will be useful in PPGAN since only the $Discriminator$ is required to differential privacy. The $Generator$'s utilizes the result from the $Discriminator$, thus differential privacy \cite{ref-17}. (So we add noise proportional to the training data on the gradient of the Wasserstein distance, rather than adding noise to the final parameters.) 

\begin{figure*}[htbp] \centering 
	\includegraphics[width=1\textwidth]{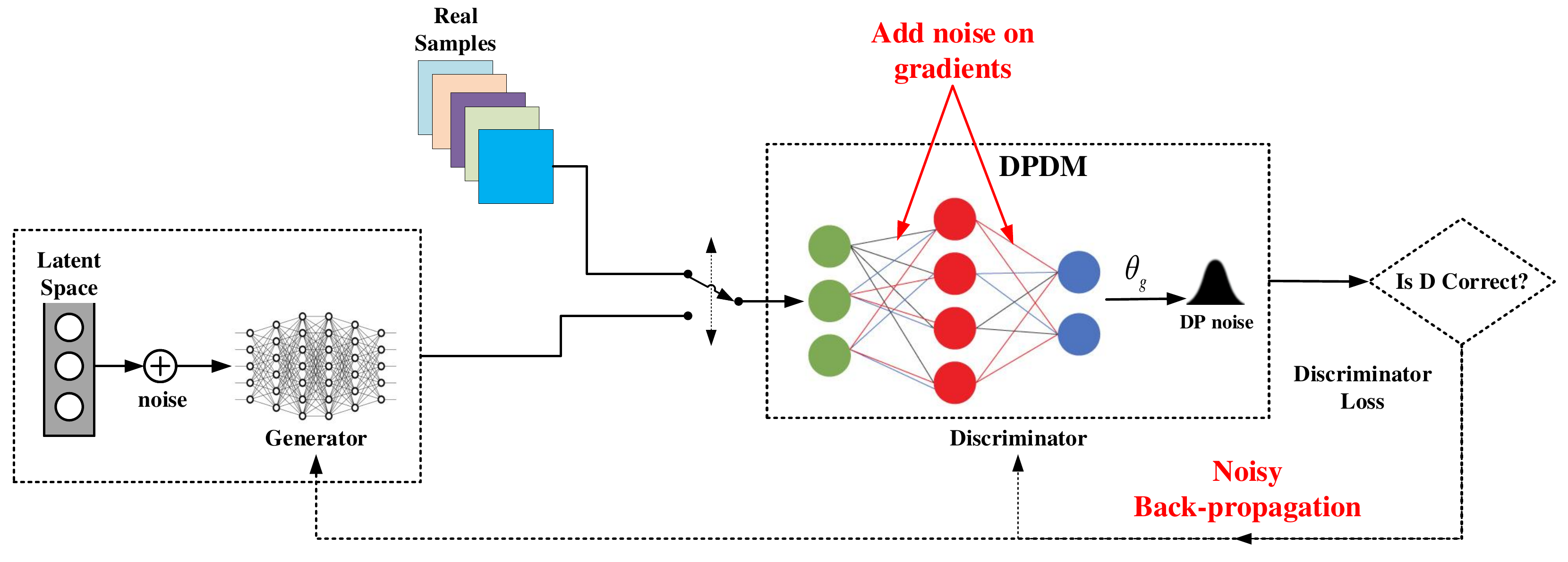}
	\caption{Overview of our Privacy-preserving Generative Adversarial Network (PPGAN) framework. } \label{fig3}
\end{figure*}

\begin{algorithm}
	\caption{Privacy-preserving Generative Adversarial Network (PPGAN)}
	\label{al-2}     
	\begin{algorithmic}[1]       
		\REQUIRE ~~\\     
		The learning rate: $\alpha $. The clipping parameter: $c$. The mini-batch size:
		$m$. The number of discriminator iterations per generator iteration: $n_d$. Generator iteration: $n_g$. Noise scale: ${\sigma _n}$. 
		\ENSURE ~~\\     
		DP generator $\theta $;
		\STATE	Initialize generator parameters and discriminator parameters ${\omega _0},{\theta _0}$, respectively.
		\FOR  {${t_1} = 1,...,{n_g}$}
		\FOR  {${t_2} = 1,...,{n_d}$}
		\STATE $\{ {x^{(i)}}\} _{i = 1}^m \sim {{\rm P}_\Upsilon }$ a mini-batch from the real data.\	
		\STATE $\{ {z^{(i)}}\} _{i = 1}^m \sim p(z)$ a mini-batch of prior samples.\label{line5}
		\STATE ${g_\omega} \leftarrow {g_\omega}\min (1,C/||g_\omega||) + N(0,{\sigma _n}^2c_g^2I)$ \textbf{(adding noise)}
		\STATE $\omega  \leftarrow clip(\omega  + \alpha  \cdot SGD(\omega ,{g_\omega }), - c,c)$
		\ENDFOR
		\STATE ${g_\delta} \leftarrow {g_\delta}\min (1,C/||g_\delta||)$
		\STATE $\theta  \leftarrow \theta  - \alpha  \cdot SGD(\theta ,{g_\theta })$
		\ENDFOR
		\RETURN $\theta$;      
	\end{algorithmic}
\end{algorithm}
\subsection{Privacy Guarantees of PPGAN}\label{sec-3-3}

To show that PPGAN in Algorithm \ref{al-2} does satisfy the differential privacy, we prove that the parameters of the generator guarantee the differential privacy relative to the sample training point under the condition that the discriminator parameters satisfy the differential privacy. Therefore, the generated data from $G$ satisfies the differential privacy, which means that $G$ does not leakage the privacy of the dataset \cite{ref-32}. Through moment accountant strategy, we can control the boundary of ${g_w}({x^{(i)}},{z^{(i)}})$ and calculate the final privacy loss. Along with Definition \ref{defi-1}, intuitively, we have the definition of privacy loss at $\tau$:
\begin{definition}\textbf{(Privacy Loss)} \label{defi-3}\\
	\begin{equation}
	c(\tau ;\phi ,aux,d,d') \buildrel \Delta \over = \log \frac{{{\rm P}[\phi (aux,d) = \tau ]}}{{{\rm P}[\phi (aux,d') = \tau ]}}
	\end{equation}
\end{definition}
We introduce privacy loss to measure the distribution difference between two changing data. The privacy loss random variable is derived from the Definition \ref{defi-1}, which is used to describe the privacy budget of $\phi (d)$. For a given mechanism $\phi$, we define the ${\upsilon ^{th}}$ moment ${\beta _\phi}(\upsilon ;aux,d,d')$ as the log of the moment generating function evaluated at the value:
\begin{definition}\textbf{(Log moment generating function)}\label{defi-4}	
	\begin{equation}
	{\beta _\phi }(\upsilon ;aux,d,d') \buildrel \Delta \over = \log {E_{o \sim \phi }}[{e^{\upsilon C(\phi ,aux,d,d')}}]
	\end{equation}
\end{definition}
\begin{definition}\textbf{(Moments Accountant)}\label{defi-5}
	\begin{equation}
	{\beta _\phi }(\upsilon ) \buildrel \Delta \over = \mathop {\max }\limits_{aux,d,d'} {\beta _\phi }(\upsilon ;aux,d,d')
	\end{equation}
\end{definition}
The basic idea behind the moments accountant is to accumulate the privacy expenditure by framing the privacy loss as a random variable and using its moment-generating functions to understand that variable’s distribution better.  This property makes the PPGAN model training more stable \cite{ref-20}. The tail bound can also be applied to privacy guarantee (In \cite{ref-14}). Since the moments accountant saves a factor of $\sqrt {\log ({n_g}/\delta )} $, according to Definition \ref{defi-1}, this is a significant improvement for the large iteration ${{n_g}}$.

The following theorem, a proof of which can be found in \cite{ref-18,ref-14,ref-6,ref-19}, allows us to move the burden of differential privacy to the discriminator; the differential privacy of the generator will follow by the theorem.

\begin{theorem}\label{th-1}
	\textbf{(Post-processing)}\\ Let $\phi$ be an $(\varepsilon ,\delta )$-differentially private algorithm and let $f:\xi  \to \xi '$ where $\xi '$ is any arbitrary space. Then $f \circ \phi$ meets $(\varepsilon ,\delta )$-differentially private.
\end{theorem}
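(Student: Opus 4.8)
The plan is to exploit the fact that differential privacy, as formalized in Definition \ref{defi-1}, is a statement purely about the output \emph{distribution} of $\phi$ on neighbouring inputs; any mapping applied afterward can only blur these two distributions together and hence cannot increase the distinguishing ratio between them. In the context of PPGAN this is exactly what we need, since the generator acts as the post-processing map $f$ applied to the output of the differentially private discriminator $\phi$. I would first settle the case where $f$ is deterministic, and then bootstrap to an arbitrary (possibly randomized) $f$ by viewing it as a mixture of deterministic maps.

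For deterministic $f$, I would fix an arbitrary neighbouring pair $d,d'$ with $||d-d'|| \le 1$ and an arbitrary (measurable) event $O' \subseteq \xi'$. The key move is to pull the event back through $f$: define the preimage $O = f^{-1}(O') = \{\, r \in Range(\phi) : f(r) \in O' \,\}$. Then the events $\{(f\circ\phi)(d) \in O'\}$ and $\{\phi(d) \in O\}$ coincide, and likewise for $d'$, so applying the $(\varepsilon,\delta)$-DP guarantee of $\phi$ to the single event $O$ yields
\begin{align}
Pr[(f\circ\phi)(d) \in O'] &= Pr[\phi(d) \in O] \notag\\
&\le e^{\varepsilon}\, Pr[\phi(d') \in O] + \delta \notag\\
&= e^{\varepsilon}\, Pr[(f\circ\phi)(d') \in O'] + \delta.
\end{align}
Since $O'$ was arbitrary, $f\circ\phi$ is $(\varepsilon,\delta)$-DP.

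For randomized $f$, I would regard $f$ as drawing an internal seed $\omega$ independent of the data and then applying a deterministic map $f_\omega$. Conditioning on $\omega$, each $f_\omega\circ\phi$ obeys the displayed inequality by the deterministic case; averaging over $\omega$ and using linearity of expectation gives
\begin{align}
Pr[(f\circ\phi)(d) \in O'] &= E_\omega\big[\, Pr[(f_\omega\circ\phi)(d) \in O'] \,\big] \notag\\
&\le E_\omega\big[\, e^{\varepsilon}\, Pr[(f_\omega\circ\phi)(d') \in O'] + \delta \,\big] \notag\\
&= e^{\varepsilon}\, Pr[(f\circ\phi)(d') \in O'] + \delta,
\end{align}
which is the claim. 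I expect the main obstacle to be the rigorous justification of this last step: arguing that an \emph{arbitrary} randomized post-processing genuinely decomposes as a mixture of deterministic maps, and checking measurability of the preimage $f^{-1}(O')$ when $\xi'$ is an unrestricted ``arbitrary space.'' For the discrete gradients and generator outputs relevant to PPGAN these concerns vanish immediately, but in full generality they require a standard Fubini/measure-theoretic argument, which is the only technically delicate point; everything else is a direct rewriting of the event through $f$ followed by an application of Definition \ref{defi-1}.
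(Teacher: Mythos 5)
Your proof is correct, but there is nothing in the paper to compare it against: the paper never proves Theorem \ref{th-1} at all, stating only that ``a proof \ldots can be found in'' its references and using the result as a black box to shift the privacy burden onto the discriminator. Your argument is precisely the standard proof in those references (it is Proposition 2.1 of Dwork and Roth): pull the event back through $f$ via $O = f^{-1}(O')$, apply Definition \ref{defi-1} once for deterministic $f$, then handle randomized $f$ as a mixture of deterministic maps. The delicate point you flag in the last step is real --- an arbitrary Markov kernel on an unrestricted output space need not decompose into a data-independent mixture of deterministic maps --- but it can be bypassed entirely rather than patched: set $g(r) = Pr[f(r)\in O']$, a measurable function taking values in $[0,1]$, so that $Pr[(f\circ\phi)(d)\in O'] = E_{r\sim \phi(d)}[g(r)]$, and use the layer-cake identity $E_{P}[g] = \int_0^1 P(g>t)\,dt$ together with the event-wise guarantee $P(g>t) \le e^{\varepsilon}Q(g>t)+\delta$ applied to each super-level set $\{g>t\}$; integrating over $t\in[0,1]$ yields $E_{P}[g] \le e^{\varepsilon}E_{Q}[g] + \delta$ in one stroke, on arbitrary measurable spaces. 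This subsumes both of your cases and needs only measurability of $f$, which must be assumed anyway for the statement to be meaningful --- the paper's phrasing ``$\xi'$ is any arbitrary space'' silently elides exactly this hypothesis, and your proposal is right to surface it.
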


Next, we present the mathematical reasoning proof that the discriminator satisfies the differential privacy. First, we propose a lemma that PPGAN satisfies the definition of DP.

\begin{lemma}\label{le-1}
	Under the definition of GNM and $L_2$-sensitivity (in Definition \ref{defi-sens}), for any $\delta  \in (0,1)$, $\sigma  > \frac{{\sqrt {2\ln (1.25/\delta )} \Delta f}}{\varepsilon }$ , we have noise $Y \sim N(0,{\sigma ^2})$ satisfies $(\varepsilon ,\delta )$-DP.
\end{lemma}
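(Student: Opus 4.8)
The plan is to follow the classical analysis of the Gaussian mechanism by studying the \emph{privacy loss random variable} directly and showing that it lies below $\varepsilon$ except on an event of probability at most $\delta$. Since $Y \sim N(0,\sigma^2)$ is scalar, I would fix an arbitrary pair of neighbouring datasets $d,d'$, set $v = f(d)-f(d')$ so that $|v|\le \Delta f$ by Definition~\ref{defi-sens}, and write the output as $x = f(d)+y$ with $y\sim N(0,\sigma^2)$. Forming the ratio of the two Gaussian densities centred at $f(d)$ and $f(d')$ gives the privacy loss
\begin{equation}
L(x) = \ln\frac{\exp\!\big(-\tfrac{1}{2\sigma^2}(x-f(d))^2\big)}{\exp\!\big(-\tfrac{1}{2\sigma^2}(x-f(d'))^2\big)} = \frac{2vy+v^2}{2\sigma^2}.
\end{equation}
The crucial structural observation is that $L$ is an affine function of the Gaussian $y$, hence is itself normally distributed, so controlling its upper tail reduces to a one-dimensional Gaussian tail estimate.

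Next I would translate the requirement $L(x) \le \varepsilon$ into a threshold on the noise. Rearranging $\tfrac{2vy+v^2}{2\sigma^2} > \varepsilon$ and taking the worst case $v = \Delta f$ yields the event $y > \tfrac{\sigma^2\varepsilon}{\Delta f} - \tfrac{\Delta f}{2}$. Normalising by writing $y=\sigma Z$ with $Z\sim N(0,1)$, this becomes $P\big[Z > t\big]$ with $t = \tfrac{\sigma\varepsilon}{\Delta f} - \tfrac{\Delta f}{2\sigma}$. I would then invoke the standard Gaussian tail inequality $P[Z>t]\le \tfrac{1}{t\sqrt{2\pi}}\,e^{-t^2/2}$ and require the right-hand side to be at most $\delta$. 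Substituting the hypothesised lower bound $\sigma > \tfrac{\sqrt{2\ln(1.25/\delta)}\,\Delta f}{\varepsilon}$ into $t$ and simplifying (under the usual regime $\varepsilon \le 1$) is exactly what forces this tail probability below $\delta$; the somewhat unusual constant $1.25$ emerges precisely from absorbing the polynomial prefactor $\tfrac{1}{t\sqrt{2\pi}}$ of the tail bound into the exponential term.

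Finally I would assemble the $(\varepsilon,\delta)$-DP guarantee from this tail bound by the standard partitioning argument. Let $B = \{x : L(x) > \varepsilon\}$ denote the ``bad'' outputs; the previous step establishes $P[\phi(d)\in B]\le \delta$. For any measurable $O\subseteq \mathrm{Range}(\phi)$ I would split $O$ into $O\cap B$ and $O\cap B^{c}$, using $L(x)\le\varepsilon$ (equivalently $p_d(x)\le e^{\varepsilon}p_{d'}(x)$) on $B^{c}$ to obtain
\begin{equation}
P[\phi(d)\in O] \le e^{\varepsilon}\,P[\phi(d')\in O] + P[\phi(d)\in B] \le e^{\varepsilon}\,P[\phi(d')\in O] + \delta,
\end{equation}
which is exactly Definition~\ref{defi-1}. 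The main obstacle is the middle step: the delicate algebra of the Gaussian tail estimate, where one must verify that the stated threshold on $\sigma$ is genuinely sufficient and correctly pin down the constant $1.25$, rather than the conceptually routine reduction and reassembly surrounding it.
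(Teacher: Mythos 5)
Your proposal is correct and follows essentially the same route as the paper's proof (the classical Dwork--Roth analysis of the Gaussian mechanism): both form the log-ratio of the two Gaussian densities to obtain the privacy loss $\frac{2vy+v^2}{2\sigma^2}$, reduce $L>\varepsilon$ to the tail event $y$ exceeding the threshold $\frac{\sigma^2\varepsilon}{\Delta f}-\frac{\Delta f}{2}$, apply the Gaussian tail bound $P[Z>t]\le \frac{1}{t\sqrt{2\pi}}e^{-t^2/2}$ (which the paper derives via the $\frac{x}{t}$ integrand trick), and extract the constant $1.25$ from the algebra in the regime $\varepsilon<1$, where the paper writes $\sigma = c\,\Delta f/\varepsilon$ and shows $c^2>2\ln(1.25/\delta)$. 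The only differences are cosmetic: you budget the one-sided tail at level $\delta$ where the paper uses $\delta/2$, and you spell out the final partition of $O$ into $\{L>\varepsilon\}$ and its complement to recover Definition~\ref{defi-1} --- a step the paper leaves implicit.
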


\begin{proof}
	We assume that $\Delta f$ is the $L_2$-sensitivity, and according to the Definition. \ref{defi-1}, then we have:
	\begin{equation}
	|\ln \frac{{{e^{ - \frac{1}{{2{\sigma ^2}}}{x^2}}}}}{{{e^{ - \frac{1}{{2{\sigma ^2}}}{{(x + \Delta f)}^2}}}}}| = |\frac{1}{{2{\sigma ^2}}}(2x\Delta f + {(\Delta f)^2})| \le \varepsilon \\
	\therefore {\rm{ }}|x| \le \frac{{{\sigma ^2}\varepsilon }}{{\Delta f}} - \frac{{\Delta f}}{2}\Delta f .
	\end{equation}
	Let $t = \frac{{{\sigma ^2}\varepsilon }}{{\Delta f}} - \frac{{\Delta f}}{2}$, if and only if $||x|| \le t$, the distribution satisfies DP, and when $||x|| > t$, we want the probability of privacy leakage to be less than $\delta$, so we have:
	\begin{equation}
	P(x > t) < \frac{\delta }{2}
	\end{equation}
	where $P( \cdot )$ denotes the probability of revealing privacy. Next, we prove that the Gaussian distribution function is bounded above:
	\begin{equation}
	\begin{aligned}
P(x > t) &= \frac{1}{{\sqrt {2\pi \sigma } }}\int_t^\infty  {{e^{ - \frac{{{x^2}}}{{2{\sigma ^2}}}}}} dx < \frac{1}{{\sqrt {2\pi \sigma } }}\int_t^\infty  {\frac{x}{t}{e^{ - \frac{{{x^2}}}{{2{\sigma ^2}}}}}} dx \\&= \frac{\sigma }{{\sqrt {2\pi t} }}{e^{ - \frac{{{t^2}}}{{2{\sigma ^2}}}}}(\because x > t)\
    \end{aligned}
	\end{equation}
	Then the problem is converted to:
	\begin{equation}
	\begin{gathered}
	\frac{\sigma }{{\sqrt {2\pi t} }}{e^{ - \frac{{{t^2}}}{{2{\sigma ^2}}}}} < \frac{\delta }{2} \hfill ,
	\frac{t}{\sigma }{e^{\frac{{{t^2}}}{{2{\sigma ^2}}}}} > \frac{2}{{\sqrt {2\pi \sigma } }} \hfill ,
	\ln \frac{t}{\sigma } + \frac{{{t^2}}}{{2{\sigma ^2}}} > \ln \frac{2}{{\sqrt {2\pi \sigma } }} \hfill 
	\end{gathered} 
	\end{equation}
	\begin{equation}\label{eq-left}
	\therefore \left\{ \begin{gathered}
	\ln \frac{t}{\sigma } \geqslant 0 \hfill \\
	\frac{{{t^2}}}{{2{\sigma ^2}}} > \ln \frac{2}{{\sqrt {2\pi \delta } }} \hfill \\ 
	\end{gathered}  \right.
	\end{equation}
	For the left two terms of Equation \ref{eq-left}, because $t = \frac{{{\sigma ^2}\varepsilon }}{{\Delta f}} - \frac{{\Delta f}}{2}$, let $\sigma  = c\frac{{\Delta f}}{\varepsilon }$, then $t = c\sigma  - \frac{{\Delta f}}{2}$, thus we have:
	\begin{equation}\label{eq-t}
	\frac{t}{\sigma } = c - \frac{{\Delta f}}{{2\sigma }} = c - \frac{\varepsilon }{{2c}}.
	\end{equation}
	Here $\varepsilon  < 1,c \geqslant 1$, then
	\begin{equation}\label{eq-c}
	\ln (c - \frac{\varepsilon }{{2c}}) > \ln (c - \frac{1}{2}) \geqslant 0.
	\end{equation}
	By Equation \ref{eq-c}, we have $c \geqslant \frac{3}{2}$. By Equation \ref{eq-t}, we have:
	\begin{equation}
	\frac{{{t^2}}}{{2{\sigma ^2}}} = \frac{1}{2}({c^2} - \varepsilon  + \frac{{{\varepsilon ^2}}}{{4{c^2}}}).
	\end{equation}
	Because $\varepsilon  < 1,c \geqslant \frac{3}{2}$, we have:
	\begin{equation}
	{c^2} - \varepsilon  + \frac{{{\varepsilon ^2}}}{{4{c^2}}} > c^2 - \frac{8}{9} > 2\ln \frac{1}{{\sqrt {2\pi \delta } }}
	\end{equation}
	\begin{equation}
	\begin{gathered}
	{c^2} > \ln \frac{2}{\pi }{e^{\frac{8}{9}}} + 2\ln \frac{1}{\delta } \hfill ,
	\because \ln \frac{2}{\pi }{e^{\frac{8}{9}}} > {1.25^2} \hfill ,
	\therefore {c^2} > 2\ln \frac{{1.25}}{\delta } \hfill \\ 
	\end{gathered} 
	\end{equation}
	In the above equations, let $\sigma  = c\frac{{\Delta f}}{\varepsilon }$, so we have $\sigma  > \frac{{\sqrt {2\ln (1.25/\delta )} \Delta f}}{\varepsilon }$. In particular, in the SGD algorithm, Gaussian noise meets the definition of satisfying differential privacy as long as it satisfies $\sigma  \geqslant c\frac{{q\sqrt {T\ln (\frac{1}{\delta }} )}}{\varepsilon }$, where $q$ is the sampling probability and $T$ is the iteration round.$\blacksquare$
	
\end{proof} 

According to \cite{ref-18}, the conditions for the discriminator to guarantee differential privacy are given as follows:
\begin{equation}\label{eq-17}
{\sigma _n} = 2q\sqrt {{n_d}\log (\frac{1}{\delta }} )/\varepsilon 
\end{equation}
where $q$ is the sampling probability and $n_d$ is the number of iterations of the discriminator in each loop.
\begin{theorem}\label{th-2}
	Equation\ref{eq-17} represents the relationship between the noise level ${\sigma _n}$  and the privacy level $\epsilon$. When we give a fixed perturbation ${\sigma _n}$ on the gradient, according to Equation\ref{eq-17}, we know that the larger the $q$ is, the $D$ gets the fewer privacy guarantee. Because the $D$ calculates more data, the privacy that can be allocated on each data point is limited. In addition, due to the data provides more information,  more iterations ($n_d$) will result in fewer privacy guarantees. The facts described above require us to be cautious when choosing parameters to achieve a reasonable level of privacy.
\end{theorem}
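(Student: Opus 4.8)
The plan is to read Equation \ref{eq-17} as an explicit formula for the privacy budget $\varepsilon$ and to reduce the three qualitative claims to elementary monotonicity statements. Recall from Definition \ref{defi-1} and the remark that follows it that $\varepsilon$ is the privacy budget: a \emph{larger} $\varepsilon$ means a \emph{weaker} privacy guarantee (with $\varepsilon = \infty$ being non-private). Hence the phrase ``fewer privacy guarantee'' is precisely the statement that $\varepsilon$ grows. So it suffices to hold the perturbation level $\sigma_n$ and the parameter $\delta$ fixed, and to show that the resulting $\varepsilon$ is strictly increasing in the sampling probability $q$ and in the discriminator iteration count $n_d$.

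First I would solve Equation \ref{eq-17} for $\varepsilon$, obtaining
\begin{equation}
\varepsilon = \frac{2q\sqrt{n_d \log(1/\delta)}}{\sigma_n}.
\end{equation}
With $\sigma_n$ and $\delta$ held fixed, differentiating with respect to $q$ gives $\partial \varepsilon / \partial q = 2\sqrt{n_d \log(1/\delta)}/\sigma_n > 0$, so $\varepsilon$ is strictly increasing in $q$; this establishes that a larger sampling probability weakens the guarantee. Differentiating with respect to $n_d$ gives $\partial \varepsilon / \partial n_d = q\sqrt{\log(1/\delta)}/(\sigma_n \sqrt{n_d}) > 0$, so $\varepsilon$ is strictly increasing in $n_d$ as well, establishing that more discriminator iterations weaken the guarantee. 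Both derivatives are manifestly positive because $q$, $n_d$, and $\sigma_n$ are positive and $\log(1/\delta) > 0$ for $\delta \in (0,1)$.

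The remaining work is interpretive rather than computational, and this is where I expect the only real friction: the monotonicity above is pure algebra, so the content of the theorem lies in explaining \emph{why} each relationship holds. I would tie the $q$-dependence to the fact that a larger minibatch means the discriminator touches more records per step, so a fixed injected noise level $\sigma_n$ must protect more data points and the per-record budget is diluted. I would tie the $n_d$-dependence to the composition behaviour tracked by the moments accountant (Definitions \ref{defi-4}--\ref{defi-5}): each additional iteration composes another noisy query over the data and accumulates privacy loss, so more iterations spend more of the budget. The practical conclusion — that $q$ and $n_d$ must be chosen cautiously to keep $\varepsilon$ small — then follows immediately from the two sign computations.
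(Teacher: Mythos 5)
Your proposal is correct and takes essentially the same route as the paper, which in fact states Theorem \ref{th-2} with no separate proof at all: the justification is embedded in the theorem's own prose and consists precisely of the reading of Equation \ref{eq-17} that you make explicit, namely that for fixed $\sigma_n$ and $\delta$ the budget $\varepsilon = 2q\sqrt{n_d \log(1/\delta)}/\sigma_n$ grows with $q$ and with $n_d$, so the guarantee weakens. Your sign computations and your two interpretations (per-record budget dilution for $q$, composition tracked by the moments accountant for $n_d$) formalize, and slightly strengthen, exactly the informal sentences the paper uses, so there is no gap and nothing methodologically different to flag.
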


PPGAN modifies the GAN framework to keep differentially private while relying on Theorem \ref{th-1},\ref{th-2} and Lemma \ref{le-1} to change the differential private $G$ to train the differentially private $D$. 

\section{EXPERIMENTS}\label{sec-4}
In this section, we will conduct a series of experiments to investigate how the privacy budget affects the effectiveness of PPGAN on the two benchmark datasets MNIST and MIMIC-III \cite{ref-med-1}. MIMIC-III is a well-known public EHR database that includes medical records of 46,520 intensive care units (ICUs) over the age of 11 \cite{ref-18}. We employ PPGAN to generate EHRs and protected privacy information at the same time. In the experiment, we focus on two issues: 1) Relationship between Privacy budget and Generation Performance; 2) Relationship between Privacy budget and High-quality Datasets.

\subsection{Data preprocessing}

First, we only use the extracted ICD9 code (The ICD9 code represents the type of disease, and the range of coding is $C \in [1,1071]$.) \cite{ref-27,ref-med-2} and use the first three digits for encoding. We then record the patient's admission to the disease and turn it into a vector $x$. For example, patient $P$ was diagnosed with three diseases at admission, and the disease codes are indicated by 9, 42, 146, respectively. (So the ICD9 code consists of 9, 42 and 146.) We use the vector $ x $ to indicate the patient's access record, where the vector is at position 9, the 42nd and 146th bits are set to 1, and the rest are set to 0. Then we aggregate the patient's longitudinal record into a single fixed-size vector $x \in {{\rm Z}^ + }$, where $|C| = 1071$ for dataset.

\subsection{Relationship between Privacy budget and Generation Performance}
In this section, we mainly explore the relationship between privacy budget and generation performance. Considering the combined properties data of $Gaussian$ $noise$, we add Gaussian noise in the process of stochastic gradient descent. Different Gaussian noises can produce different levels of privacy. We input the same set of MNIST image datasets and observe the output generated samples. In the experiments,  ${\alpha _d} = 5.0 \times {10^{ - 5}}$ learning rate of discriminator; ${\alpha _g} = 5.0 \times {10^{ - 5}}$, learning rate of generator; moments accountant parameter $C = 1.0 \times {10^{ - 2}}$; noise scale $\delta  = 1.0 \times {10^{ - 5}}$, and the number of iterations on discriminator $t_d$ and generator $t_g$ are 5 and $5.0 \times {10^5}$, respectively.
The experimental results are shown in Fig. \ref{fig4}. The code is available.\footnote{\url{https://github.com/hdliuyi/PPGANs-Privacy-preserving-GANs }}

As shown in Fig. \ref{fig4}, as the privacy budget increases, the quality of the generated images is getting worse. We add well-designed noise that disturbs the data point distribution of the image. Since the noise is randomly added, the distribution of disturbing data points is not fixed, thus ensuring differential privacy. 
\begin{figure*}[htbp]
	\centering
	\includegraphics[width=1\textwidth]{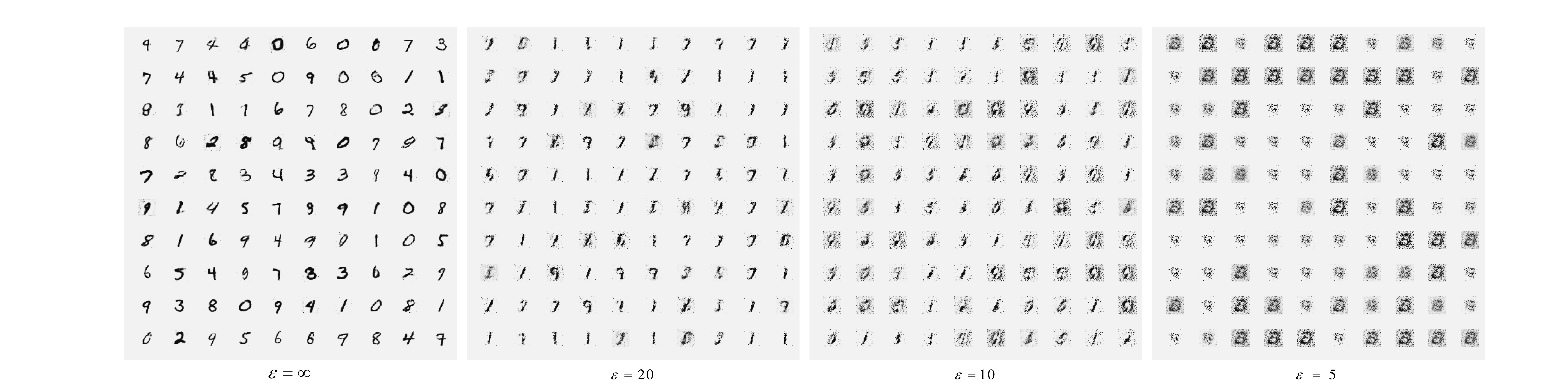}
	\caption{Four different $\epsilon$ values are generated for four different quality pictures on MNIST dataset.($\varepsilon  = \infty ,\varepsilon  = 20,\varepsilon  = 10,\varepsilon  = 5$; $\delta  = 1.0 \times {10^{ - 5}}$)  }\label{fig4}
\end{figure*}
\begin{figure}[htbp!]
\centering
	\includegraphics[width=1\textwidth]{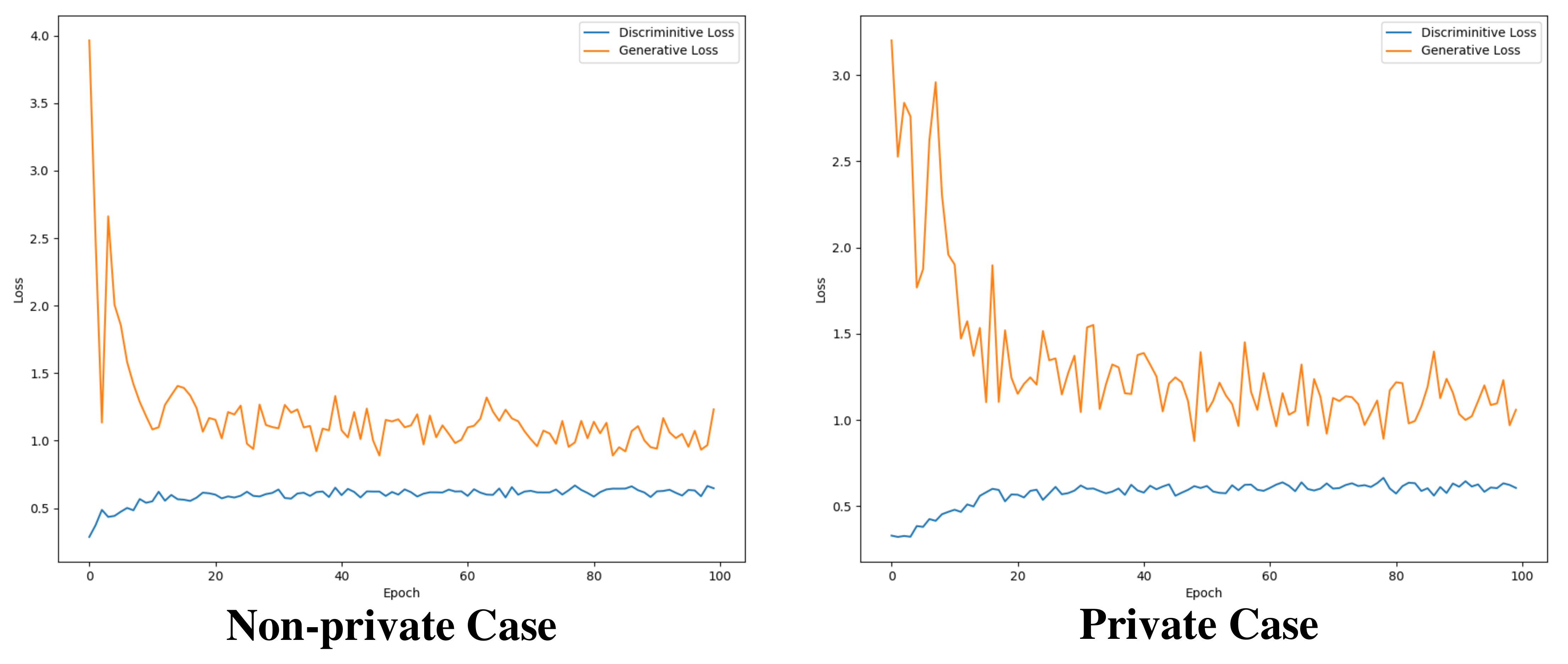}
	\caption{Loss of Non-private Case ($\varepsilon  = \infty $) and Private Case ($\varepsilon  \ne \infty $).}\label{fig5}
\end{figure}

Next, we will focus on the impact of noise on PPGAN's loss function during training. The results are shown in Fig. \ref{fig5} In the non-private case, we observe the training loss of the first 100 epoch in training. The result indicates that the loss of GAN is smooth and stable, and no large fluctuations exist in this round of training. When the loss of the PPGAN with noise starts to fluctuate at the tail of the curve, PPGAN can still converge. As can be inspected from Fig. \ref{fig5}, the convergence rate of PPGAN is acceptable as the compromise of the introduced privacy preservation capability.

\subsection{Relationship between Privacy budget and High-quality Datasets}
In this section, we quantitatively evaluate the performance of PPGAN. Specifically, we first compare generated data with real data based on statistical characteristics. We propose a \textit{Generate score} to measure the quality of data generated by GAN. We proposed \textit{Generate score} ($GS({P_g})$) to measure the quality of data generated by PPGAN, which can be formally defined as follows for $P_g$:
\begin{definition}(Generate scores):\label{defi-7}
	\begin{equation}	
	\begin{array}{l}
	IS({P_g}) = {e^{{E_{x \sim {P_g}}}[KL(PM(y|x)||PM(y))]}}\\
	GS({P_g}) = |\frac{{IS({P_g}) - mean(IS({P_g}))}}{{\max (IS(Pg)) - \min (IS({P_g}))}}|
	\end{array}
	\end{equation}
	where $IS({P_g})$ is \textit{Inception score} which  is measure of the performance of the GAN.
\end{definition}
\begin{figure}[htbp]\centering
	\includegraphics[width=0.5\textwidth]{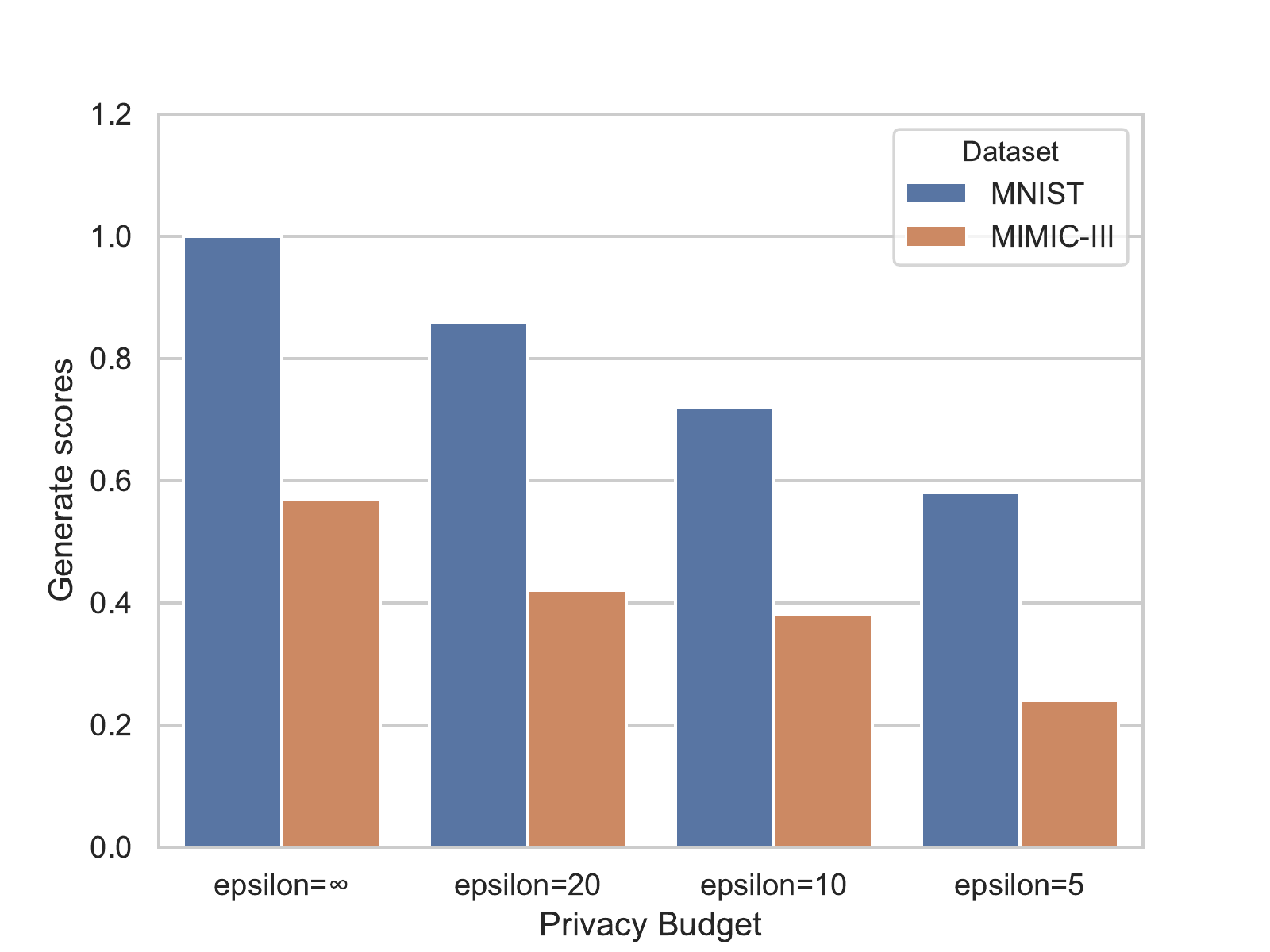}
	\caption{Generate scores of generative data on MNIST.}\label{fig6}
\end{figure}

The experimental result is shown in Fig. \ref{fig6}. The generated data's (generated by PPGAN) generate score is compared to the real data of the MNIST dataset with different privacy budgets. The larger the score value, the better the quality of the data generated by the generator. The figure shows the distribution of the generate scores of PPGAN in the case of $\epsilon=20,10,5$. It can be seen from the figure that the score is very close to the real data generated by the WGAN (non-private case, $ \varepsilon = \infty $.). When $\epsilon=20$, the PPGAN generate score is only 0.14 different from the WGAN generate score, which indicates that the PPGAN generation quality is close to the WGAN.

To evaluate the performance of PPGAN, we compare three solutions, namely dp-GAN \cite{ref-3}, DPGAN \cite{ref-18} and WGAN \cite{ref-20} (Non-private Case) in terms of the quality of the generated data.
\begin{figure}[htbp]\centering
	\includegraphics[width=0.5\textwidth]{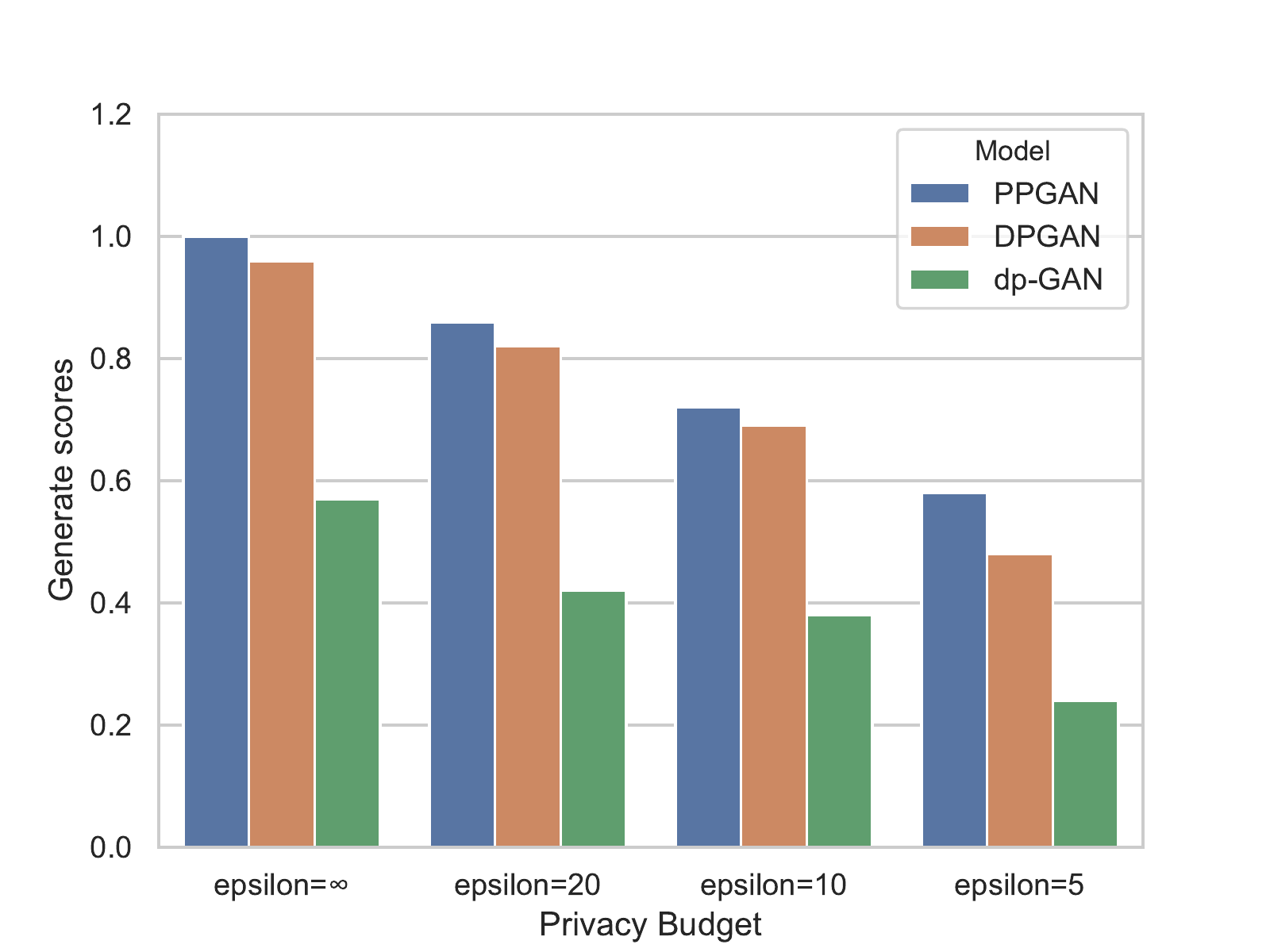}
	\caption{Generate scores of generative data on model PPGAN, DPGAN and dp-GAN. ($\delta  = 1.0 \times {10^{ - 5}}$)}\label{fig7}
\end{figure}

As can be seen from Fig. \ref{fig7}, the data quality generated by PPGAN is better than dp-GAN and DPGAN.

%
\section{CONCLUSION}\label{sec-5}
In this paper, we propose the PPGAN model that preserves the privacy of training data in a differentially private case. PPGAN mitigates information leakage by adding well-designed noise to the gradient during the learning process. We conducted two experiments to show that the proposed algorithm can converge under the noise and constraints of the training data and generate high-quality data. Also, our experimental results verify that PPGAN does not suffer from mode collapse or gradient disappearance during training, thus maintaining excellent stability and scalability of model training.

\section*{ACKNOWLEDGMENTS}
This work is supported by the Ministry of Education of China and the School of Entrepreneurship Education of Heilongjiang University (Grant NO.201910212133) and Heilongjiang Provincial Natural Science Foundation of China (Grant NO.QC2016091).

\bibliographystyle{unsrt}  

\bibliographystyle{IEEEtran}
\bibliography{reference}

\begin{thebibliography}{10}

\bibitem{ref-33}
Zhang Yanjun, Yang Xiaodong, Liu Yi, Zheng Dayuan, and Bi~Shujun.
\newblock Research on the construction of wisdom auditing platform based on
  spatio-temporal big data.
\newblock {\em Computer \& Digital Engineering}, 47(03):616--619, 2019.

\bibitem{ref-1}
Paul Voigt and Axel Von~Dem Bussche.
\newblock {\em The EU General Data Protection Regulation (GDPR)}.
\newblock 2017.

\bibitem{ref-18}
Liyang Xie, Kaixiang Lin, Shu Wang, Fei Wang, and Jiayu Zhou.
\newblock Differentially private generative adversarial network.
\newblock {\em arXiv preprint arXiv:1802.06739}, 2018.

\bibitem{ref-2}
Tero Karras, Samuli Laine, and Timo Aila.
\newblock A style-based generator architecture for generative adversarial
  networks.
\newblock In {\em Proceedings of the IEEE Conference on Computer Vision and
  Pattern Recognition}, pages 4401--4410, 2019.

\bibitem{ref-5}
Cynthia Dwork.
\newblock Differential privacy.
\newblock In {\em International Colloquium on Automata, Languages, \&
  Programming}, 2006.

\bibitem{ref-3}
Martin Arjovsky, Soumith Chintala, and L{\'e}on Bottou.
\newblock Wasserstein gan.
\newblock {\em arXiv preprint arXiv:1701.07875}, 2017.

\bibitem{ref-11}
Edward Choi, Siddharth Biswal, Bradley Malin, Jon Duke, Walter~F Stewart, and
  Jimeng Sun.
\newblock Generating multi-label discrete patient records using generative
  adversarial networks.
\newblock {\em arXiv preprint arXiv:1703.06490}, 2017.

\bibitem{ref-10}
Brett~K Beaulieu-Jones, Zhiwei~Steven Wu, Chris Williams, Ran Lee, Sanjeev~P
  Bhavnani, James~Brian Byrd, and Casey~S Greene.
\newblock Privacy-preserving generative deep neural networks support clinical
  data sharing.
\newblock {\em Circulation: Cardiovascular Quality and Outcomes},
  12(7):e005122, 2019.

\bibitem{ref-4}
Briland Hitaj, Giuseppe Ateniese, and Fernando Perez-Cruz.
\newblock Deep models under the gan: information leakage from collaborative
  deep learning.
\newblock In {\em Proceedings of the 2017 ACM SIGSAC Conference on Computer and
  Communications Security}, pages 603--618. ACM, 2017.

\bibitem{ref-12}
Song Shuang, Kamalika Chaudhuri, and Anand~D. Sarwate.
\newblock Stochastic gradient descent with differentially private updates.
\newblock In {\em Global Conference on Signal \& Information Processing}, 2014.

\bibitem{ref-15}
Reza Shokri and Vitaly Shmatikov.
\newblock Privacy-preserving deep learning.
\newblock In {\em Allerton Conference on Communication}, 2015.

\bibitem{ref-16}
Nicolas Papernot, Mart{\'\i}n Abadi, Ulfar Erlingsson, Ian Goodfellow, and
  Kunal Talwar.
\newblock Semi-supervised knowledge transfer for deep learning from private
  training data.
\newblock {\em arXiv preprint arXiv:1610.05755}, 2016.

\bibitem{ref-21}
Ferdinando Fioretto and Pascal~Van Hentenryck.
\newblock Differential privacy of hierarchical census data: An optimization
  approach.
\newblock In {\em Proceedings of the International Conference on Principles and
  Practice of Constraint Programming {(CP)}}, pages 639--655, 2019.

\bibitem{ref-14}
Martin Abadi, Andy Chu, Ian Goodfellow, H~Brendan McMahan, Ilya Mironov, Kunal
  Talwar, and Li~Zhang.
\newblock Deep learning with differential privacy.
\newblock In {\em Proceedings of the 2016 ACM SIGSAC Conference on Computer and
  Communications Security}, pages 308--318. ACM, 2016.

\bibitem{ref-31}
Jinshuo Dong, Aaron Roth, and Weijie~J. Su.
\newblock Gaussian differential privacy, 2019.

\bibitem{ref-29}
Ian~J Goodfellow, Jean Pouget-Abadie, Mehdi Mirza, Xu~Bing, David Warde-Farley,
  Sherjil Ozair, Aaron Courville, and Yoshua Bengio.
\newblock Generative adversarial nets.
\newblock In {\em International Conference on Neural Information Processing
  Systems}, 2014.

\bibitem{ref-30}
Xingliang Yuan, Xinyu Wang, Cong Wang, Jian Weng, and Kui Ren.
\newblock Enabling secure and fast indexing for privacy-assured healthcare
  monitoring via compressive sensing.
\newblock {\em IEEE Transactions on Multimedia (TMM)}, 18(10):1--13, 2016.

\bibitem{ref-17}
Cynthia Dwork, Aaron Roth, et~al.
\newblock The algorithmic foundations of differential privacy.
\newblock {\em Foundations and Trends{\textregistered} in Theoretical Computer
  Science}, 9(3--4):211--407, 2014.

\bibitem{ref-32}
Ziming Zhao, Jialin Wang, and Yi~Liu.
\newblock User electricity behavior analysis based on k-means plus clustering
  algorithm.
\newblock In {\em 2017 International Conference on Computer Technology,
  Electronics and Communication (ICCTEC)}, pages 484--487. IEEE, 2017.

\bibitem{ref-20}
Ishaan Gulrajani, Faruk Ahmed, Martin Arjovsky, Vincent Dumoulin, and Aaron~C
  Courville.
\newblock Improved training of wasserstein gans.
\newblock In {\em Advances in neural information processing systems}, pages
  5767--5777, 2017.

\bibitem{ref-6}
Ferdinando Fioretto and Pascal {Van Hentenryck}.
\newblock Privacy-preserving federated data sharing.
\newblock In {\em Proceedings of the International Joint Conference on
  Autonomous Agents and Multiagent Systems {(AAMAS)}}, pages 638--646, 2019.

\bibitem{ref-19}
Alistair Edward~William Johnson, Tom~Joseph Pollard, Shen Lu, Li~Wei~H. Lehman,
  and Roger~G. Mark.
\newblock Mimic-iii, a freely accessible critical care database.
\newblock {\em Scientific Data}, 3:160035, 2016.

\bibitem{ref-med-1}
Alistair Edward~William Johnson, Tom~Joseph Pollard, Shen Lu, Li~Wei~H. Lehman,
  and Roger~G. Mark.
\newblock Mimic-iii, a freely accessible critical care database.
\newblock {\em Scientific Data}, 3:160035, 2016.

\bibitem{ref-27}
Scott Mclachlan, Kudakwashe Dube, and Thomas Gallagher.
\newblock Using the caremap with health incidents statistics for generating the
  realistic synthetic electronic healthcare record.
\newblock In {\em IEEE International Conference on Healthcare Informatics},
  2016.

\bibitem{ref-med-2}
\url{https://github.com/MIT-LCP/mimic-code}.

\end{thebibliography}

\end{document}